\newcounter{desccount}
\newcommand{\descitem}[1]{
  \item[#1] \refstepcounter{desccount}\label{#1}}
\newcommand{\descref}[1]{\hyperref[#1]{#1}}
\newtheorem{example}{Example}
\newtheorem{theorem}{Theorem}[section]
\newtheorem{lemma}[theorem]{Lemma}
\newtheorem{proposition}[theorem]{Proposition}
\newcommand{\free}{\mathcal{M}_{\phi}} 
\newcommand{\lossfree}{\mathcal{L}_{\phi}}
\newcommand{\nin}{c}
\newcommand{\norm}[1]{\left\lVert#1\right\rVert}
\newcommand{\sys}{S}
\newcommand{\D}{D}
\newcommand{\cumul}{F_{\ell}(l)}
\newcommand{\VARa}{V\!aR_\alpha [\lossfree]}
\title{Distributionally robust minimization in  meta-learning  for system identification }
\author{Matteo Rufolo , Dario Piga, Marco Forgione
\thanks{The authors are with the IDSIA Dalle Molle Institute for Artificial Intelligence USI-SUPSI, Via la Santa 1, CH-6962 Lugano-Viganello, Switzerland. 
 e-mails: {matteo.rufolo, dario.piga, marco.forgione@supsi.ch}}  }
\date{January 2025}
\begin{document}

\maketitle

\begin{abstract}
Meta learning aims at learning how to solve tasks, and thus it allows to estimate models that can be quickly adapted to new scenarios. This work explores distributionally robust minimization in meta learning for system identification. Standard meta learning approaches optimize the expected loss, overlooking task variability. We use an alternative approach, adopting a distributionally robust optimization paradigm that prioritizes high-loss tasks, enhancing performance in worst-case scenarios. Evaluated on a meta model trained on a class of synthetic dynamical systems and tested in both in-distribution and out-of-distribution settings, the proposed approach allows to reduce failures in safety-critical applications.
\end{abstract}

\section{Introduction}

Classic system identification (SYSID) models each system independently, using only its own input-output data and ignoring knowledge from similar systems. This often leads to suboptimal results when data is scarce. A promising alternative is to integrate \emph{meta learning}, a framework introduced in the 1980s~\cite{schmidhuber1987evolutionary} and recently revitalized for its ability to enable rapid adaptation across related tasks. By training a \emph{meta-learner} on a distribution of similar systems, the model can generalize efficiently to  unseen dynamics with minimal additional data~\cite{hospedales2021meta}. This approach offers a paradigm shift in SYSID, enhancing data efficiency and generalization.

Within the SYSID framework, a meta-learning approach treats the identification of different dynamical systems as a series of related tasks, with the aim of generalizing across multiple systems and efficiently adapt to newly encountered ones~\cite{chakrabarty2023meta}. 
One particular meta-learning paradigm used in SYSID is in-context learning, see~\cite{forgione2023context}. In this approach, a large model is pre-trained on a broad set of related datasets, allowing it to process a \textit{context} sequence consisting of input-output data from a system, along with a \textit{query} input sequence, and subsequently predict the corresponding query output without requiring task-specific re-training. Pre-training  is performed offline in a supervised setting over \textit{synthetic datasets}, derived from random input signals applied to a diverse set of randomly sampled systems within a given class. In contrast, inference follows a zero-shot learning paradigm, wherein the pre-trained model directly provides predictions for a new system  without requiring additional fine-tuning. {Unlike traditional SYSID methods, this approach trains a meta model to generalize across related systems, eliminating the need for system-specific identification and prior knowledge of the underlying dynamics.}
The meta model functions as a \emph{learned} SYSID algorithm that identifies system-specific models from the context and utilizes this knowledge to generate output predictions for the query input. 

An important characteristic of existing meta-learning approaches is their focus on minimizing the expected value of the loss over a collection of tasks. This approach implicitly assumes that all tasks are equally important, disregarding differences in task complexity or loss magnitude. While effective in many cases, this uniform weighting can be problematic in safety-critical applications.
\begin{example}
{Consider an autonomous vehicle trained to follow urban trajectories using data from diverse weather conditions. Standard training assigns equal weight to all scenarios, following the expected risk minimization principle with a uniform task weighting strategy.  While effective for common conditions, it may lead to poor performance in rare but critical cases, such as snowstorms, where obscured lane markings and reduced traction pose significant challenges.}
\end{example}

To address robustness issues in meta-learning, recent studies have proposed alternative strategies. Some leverage adversarial learning to improve generalization under distribution shifts~\cite{gold2020advmeta}, while others reshape the training distribution by emphasizing high-loss tasks, thus focusing learning on the most \textit{problematic} instances~\cite{Collins2020TRMAML,wang2023simpleyet}.

In this paper, we adapt the {distributionally} robust meta-learning formulation introduced in~\cite{wang2023simpleyet}, based on the optimization of a tail loss~\cite{rockafeller2000opt}, to the in-context system identification framework. {This formulation aligns well with meta-learning, where variability and shifts across tasks demand distributional robustness for reliable generalization. This adaptation relies on the premise that tasks vary in difficulty, with more complex ones offering greater training informative content. Our goal is to exploit these tasks more effectively, following a strategy inspired by active learning~\cite{bemporad2023active}.}
 Specifically, after mathematically formulating the robust meta-learning problem for SYSID, we employ a heuristic, adapted from~\cite{rahimian2022distrobust}, to solve the distributionally robust optimization problem. {We then analyze its convergence properties, leveraging theoretical insights from the original formulation}{, but applying them to our in-context framework and providing a more practical interpretation}. With respect to~\cite{wang2023simpleyet}, we refine the convergence assumptions to  suit the in-context SYSID setting considered in this paper, providing a practical interpretation of these assumptions in the given context.

The rest of the paper is structured as follows. Section~\ref{sec:pb_desc} introduces the problem setting, presenting the {probabilistic} meta modeling framework from~\cite{rufolo2024enhanced} and discussing the limitations of the conventional expected loss formulation. Section~\ref{sec:math_form} focuses on distributionally robust minimization for meta-learning in SYSID. This includes a heuristic for robust optimization, along with new theoretical results analyzing the convergence properties of the proposed approach. Section~\ref{sec:example} reports numerical examples demonstrating improved robustness in both in-distribution {(ID)} and out-of-distribution {(OOD)} scenarios, comparing also the performance of the meta approach with classical SYSID approaches. To provide a
more comprehensive assessment, an additional analysis is
included at the end of Section~\ref{sec:example}, investigating the role
of probabilistic estimation within the distributionally robust
framework. Specifically, the proposed method is compared
to a non-probabilistic variant trained using the Root Mean
Squared Error (RMSE) loss, following~\cite{forgione2023context}, which does not
incorporate uncertainty information.

\section{In-context system identification}
\label{sec:pb_desc}
{
In this paper, following the footsteps of~\cite{forgione2023context, rufolo2024enhanced}, an in-context learner $\free$ referred to as a \emph{meta model} is employed for probabilistic modeling of \emph{a class} of dynamical systems, taking advantage of the in-context capabilities of Transformer architectures~\cite{achiam2023gpt}.
}

The meta model $\free$ {is trained to provide probabilistic output estimates over a prediction horizon}, on a potentially \emph{infinite stream} of datasets 
\(
\{ \D^{(i)} = (u_{1:n}^{(i)}, y_{1:n}^{(i)}), \, i = 1, 2, \dots, \infty \}\), {realizations of a random variable with distribution \( p(\mathcal{D}) \)}.
Each dataset $\D^{(i)}$ is obtained by applying  a random input sequence $u_{1:n}^{(i)}$, with $u_k^{(i)}\in \mathbb{R}^{n_u}$, to a  dynamical system $\sys^{(i)}$ randomly sampled within the class, resulting in the corresponding output sequence $y_{1:n}^{(i)}$, with $y_k^{(i)} \in \mathbb{R}^{n_y}$.\footnote{In the following, for notational convenience, we consider the case $n_y=1$. The extension to the multi-output case $n_y>1$ is trivial.} Formally, the datasets $\D^{(i)}$ can be seen as {independent} realizations of a random variable with distribution $p(\D)$, induced by the random generation of systems and inputs, whose analytical form is unknown. 
{Since all datasets are drawn from the same distribution $p(D)$, assumed to be well-behaved\footnote{
{The specific conditions that $p(\mathcal{D})$ must satisfy are stated in Proposition~3.2.}
}, it is intuitive that their shared structure enables few-shot adaptation, allowing knowledge to be transferred effectively across different realizations.}

Following the notation of~\cite{rufolo2024enhanced}, which extends~\cite{forgione2023context} and frames in-context SYSID in a probabilistic setting, each system $\sys^{(i)}$ generates two input-output sequences. The first sequence, referred to as the \emph{context} $(u^{(i)}_{1:m}, y^{(i)}_{1:m})$, implicitly informs the meta model about the system's dynamics. The second sequence, denoted as the \emph{query} $(\tilde{u}^{(i)}_{1:n}, \tilde{y}^{(i)}_{1:n})$, represents the data to be predicted.
We refer to \emph{task} as a combination of the two previously defined sequences, namely:
\(
\{ \D^{(i)} = ((u_{1:m}^{(i)}, y_{1:m}^{(i)}),(\tilde{u}_{1:n}^{(i)}, \tilde{y}_{1:n}^{(i)})), \, i = 1, 2, \dots, \infty \}
\). 

{Throughout the paper, we use the encoder-decoder Transformer architecture proposed in~\cite{rufolo2024enhanced} as meta model $\free$.}
The meta model processes three components: the input-output context sequence $(u_{1:m}, y_{1:m})$, the initial portion of the input-output query sequence $(\tilde{u}_{1:\nin}, \tilde{y}_{1:\nin})$, used as initial conditions, and the remaining input sequence $\tilde{u}_{\nin+1:n}$.
It then outputs the estimated mean $\mu_{\nin+1:n}$ and standard deviation $\sigma_{\nin+1:n}$ of the predicted query output distribution: 
\begin{equation}
\label{eq:Nodel_free_sim_CI}
\mu_{\nin+1:n}, \sigma_{\nin+1:n} =
\free(\tilde{u}_{\nin+1:n}, \tilde{u}_{1:{\nin}}, \tilde{y}_{1:{\nin}}, u_{1:m}, y_{1:m}).
\end{equation}
To compact the notation, we define the set of inputs processed by the meta model as $X = (\tilde{u}_{\nin+1:n}, \tilde{u}_{1:\nin}, \tilde{y}_{1:\nin}, u_{1:m}, y_{1:m})$ and the corresponding outputs as $\theta = (\mu_{\nin+1:n}, \sigma_{\nin+1:n})$. We have then: $\theta = \free(X)$.

{Both the standard meta learning loss used in~\cite{rufolo2024enhanced} and the distributionally robust one introduced in this paper are defined in terms of the}  Kullback-Leibler (KL) divergence between the unknown query output density $p(\tilde{y}_{\nin+1:n} | X)$ and the predicted query output distribution. The latter is parameterized as a Gaussian 
 $q_{\theta(\phi,X)}(\tilde{y}_{\nin+1:n})$, with  mean and standard deviation generated by $\free$. Thus, the risk function is defined as:
\begin{multline}
    \label{eq:risk_function}
\ell(\tilde{y}_{\nin+1:n}, X;\phi)  =   \log \frac{p(\tilde{y}_{\nin+1:n} | X)} {q_{\theta(\phi,X)}(\tilde{y}_{\nin+1:n})} \\
        = -\log q_{\theta(\phi,X)}(\tilde{y}_{\nin+1:n})+K\\
        =\frac{1}{2} \log \left( 2\pi \sigma_{\nin+1:n} \right) + \frac{1}{2} \left( \frac{\tilde{y}_{\nin+1:n} - \mu_{\nin+1:n}}{\sigma_{\nin+1:n}} \right)^2+K,
\end{multline}
where K is a constant independent of $\phi$. 

{
\subsection{Standard learning: expected KL loss minimization}
}
\label{subsec:learning_procedure}
In line with standard meta-learning practice,  
{$\free$ was trained in \cite{rufolo2024enhanced}} by minimizing the \emph{expected value} $J(\phi)$ of the risk function~\eqref{eq:risk_function} over the dataset distribution $p(\D)$, i.e.,  
\begin{equation}
    \label{eq:vanilla_loss}
    \phi^* = \arg \min_{\phi} 
    \underbrace{
        \mathbb{E}_{p(\D)}
        \left[
        -\log q_{\theta(\phi,X)}(\tilde{y}_{\nin+1:n})
        \right]
    }_{= J(\phi)},
\end{equation}
{which is the expected KL divergence between $p(\tilde{y}_{\nin+1:n} | X)$ and  $q_{\theta(\phi,X)}(\tilde{y}_{\nin+1:n})$ .}
In practice, since the functional form of $p(\D)$ is unknown and, in any case, evaluating the expectation above is generally intractable, the Monte Carlo average:
\begin{equation}
   \label{eq:J_prob}
     \tilde{J}(\phi) =  \frac{1}{b} \sum_{i=1}^b -\log q_{\theta(\phi,X^{(i)})}(\tilde{y}^{(i)}_{\nin+1:n})
\end{equation}
computed over $b$ datasets sampled from $p(\D)$ {was} used to approximate $J(\phi)$.
The loss $\tilde{J}(\phi)$ {was} then minimized using stochastic minibatch gradient descent, where $b$ datasets are resampled from $p(\D)$ at each iteration, with $b$ serving as the {minibatch size} in the optimization process.
However, the uniform task weighting corresponding to the expected KL risk minimization \eqref{eq:vanilla_loss} can be problematic in applications where users not only aim to optimize the average performance but also wish to mitigate  failures in the worst-case scenarios~\cite{rockafeller2000opt}. 

\section{Robust learning: tail KL loss minimization} 

\label{sec:math_form}
{In order to avoid these failures, in this work we analyze the meta-learning framework from the perspective of the task distribution $p(\D)$, emphasizing the importance of tasks with worse performance. To this aim, we modify the optimization problem~\eqref{eq:vanilla_loss} by minimizing the expected \emph{tail loss} of the task distribution, defined as the expectation over tasks with higher risk values. This approach aims to mitigate performance disparities across tasks and enhance robustness against the most challenging instances.} In order to  minimize the expected tail loss over the task distribution,  we focus on dataset realizations whose associated risk function~\eqref{eq:risk_function} exceeds a predefined threshold. To formally define this objective, we begin by introducing the necessary mathematical framework.

We denote the support of $p(\D)$ by $\mathbb{D}$. From \eqref{eq:risk_function}, for a given  parameter vector $\phi$, we define the meta-learning operator $\lossfree$ that maps each dataset realization to its respective risk:
\begin{equation}
  \lossfree: \mathbb{D} \rightarrow   \mathbb{R}; \;\;\; \D^{(i)} \mapsto \ell(\tilde{y}_{\nin+1:n}^{(i)}, X^{(i)};\phi).
  \label{eq:meta_operator}
\end{equation}
Note that $\lossfree$ is a deterministic operator.
Since the sequences $\tilde{y}_{\nin+1:n}^{(i)}$ and $ X^{(i)}$ uniquely define the dataset realization $\D^{(i)}$, in the rest of the paper the dependence of $\ell$ on $\tilde{y}_{\nin+1:n}^{(i)}$ and $X^{(i)}$ will be written in the compact form:
    $\ell(\tilde{y}_{\nin+1:n}^{(i)}, X^{(i)};\phi) = \ell(\D^{(i)};\phi).$

Using the operator~\eqref{eq:meta_operator}, the risk function can be viewed as a random variable with the corresponding \emph{probability density function} (PDF) \( p_{\ell}(l) \) and cumulative distribution function (CDF) \( \cumul = P\{\ell(\D;\phi) \leq l, \D \in \mathbb{D}, l \in \mathbb{R}\} \). For simplicity, we abbreviate \( \ell(\D;\phi) \) as \( \ell \).

From the definition of the CDF \( \cumul \), we define a quantile known as the Value-at-Risk (VaR)~\cite{rockafeller2000opt} for the random variable \( \ell \), at any probability level \( \alpha \in (0, 1] \) (confidence interval), given the density \( p(\D) \) and the meta model parameters \( \phi \):
\begin{equation}
\label{eq:VaR}
    \VARa = \inf_{l\in\mathbb{R}}\{l\mid \cumul \geq \alpha\}.
\end{equation}

Let us focus on the domain of the random variable \( \ell \) with \( \ell(\D;\phi) \geq \VARa \), namely:
    $\mathbb{D}_\alpha := \{\D \in \mathbb{D} \;|\; \ell(\D;\phi) \geq \VARa \}.$

We can then define the conditional expectation over $\mathbb{D}_\alpha$, known as the Conditional Value-at-Risk (CVaR)~\cite{rockafeller2000opt}, as:
\begin{equation}
\label{eq:CVaR}
    C\VARa = (1-\alpha)^{-1}\int_{\mathbb{D}_{\alpha}}\ \ell(\D;\phi) \, p(\D) \, d\D.
\end{equation}
The primary objective of the robust learning problem addressed in this work is to minimize  \( C\VARa \) in \eqref{eq:CVaR}, which focuses   on the tail of the risk function distribution, i.e., dataset realizations \( \D^{(i)} \) where \( \ell(\D^{(i)};\phi) \) exceeds \( \VARa \). To this aim, we introduce a new probability distribution \( p_\alpha(\D;\phi) \) over the restricted dataset support $\mathbb{D}_\alpha$: 
\begin{equation}
    p_\alpha(\D;\phi)  := \frac{p(\D)}{\mathcal{Z}_\alpha(\phi)},\text{ with } \D\in\mathbb{D}_\alpha,
    \label{eq:updated_prob}
\end{equation}
where   $\mathcal{Z}_\alpha(\phi)$ is the normalization factor over the updated support, i.e. $\mathcal{Z}_\alpha(\phi)=\int_{\mathbb{D}_\alpha} p(\D)d\D$. The probability distribution $ p_\alpha(\D;\phi)$ in \eqref{eq:updated_prob} thus characterizes dataset realizations whose risk function $\ell$ is larger then the $\VARa$ in \eqref{eq:VaR}.

By substituting the definition of the risk function~\eqref{eq:risk_function} into~\eqref{eq:CVaR} and from the definition of $p_\alpha(\D;\phi)$ in \eqref{eq:updated_prob}, minimization of the $C\VARa$ is equivalent to solving the  optimization problem: 
\begin{equation}
    \label{eq:robust_loss}
    \phi^* = \arg \min_{\phi} 
    \underbrace{
        \mathbb{E}_{p_\alpha(\D; \phi)}
        \left[
        -\log q_{\theta(\phi,X)}(\tilde{y}_{\nin+1:n})
        \right]
    }_{= J_\alpha(\phi)}.
\end{equation}
However, since neither \( p(\D) \) nor $p_\alpha(\D; \phi)$ is known in closed form, the expectation over $p_\alpha(\D; \phi)$ in~\eqref{eq:robust_loss} cannot be computed analytically. Moreover, as there is no straightforward way to sample from $p_\alpha(\D; \phi)$, directly approximating the expectation using a Monte Carlo average is not feasible. To address this challenge, inspired by~\cite{wang2023simpleyet}, we adopt the quantile estimation strategy described in Algorithm~\ref{alg:robust_training}. The main idea of the quantile estimation strategy is to sample from $p(\D)$ a batch of $b$ tasks (step 2), and then to approximate  the loss $J_{\alpha}$ by only considering the $\lfloor (1-\alpha) b \rfloor$ tasks with the highest risk (steps 4-5), where $\lfloor \cdot \rfloor$ denotes the floor operator. { This subset of tasks, denoted by \( \mathcal{B}_{\alpha} \), is used to define the loss approximation \( \tilde{J}_{\alpha} \). The gradient of \( \tilde{J}_{\alpha} \) is then employed to update the meta model parameters \( \phi \) via standard gradient descent (steps 6-7).}
\begin{algorithm}[ht] 
    \caption{Gradient-based robust Optimization with Quantile Estimation}
    \label{alg:robust_training}
    \begin{algorithmic}[1]
        \Require Initial meta model parameters \( \phi^{(0)} \), batch size \( b \), confidence interval \( \alpha \in (0,1] \), maximum iterations \( \text{max\_iter} \), learning rate $\eta>0$
        \Ensure Optimized meta model parameters \( \hat{\phi} \)
        
        \For{\( j = 1 \) to \( \text{max\_iter} \)}
            \State Sample a batch \( \mathcal{B} = \{\D^{(i)}\}_{i=1}^{b},\; D^{(i)} \sim p(\D) \);
            \State Compute the risk function \( l_i = \ell(\D^{(i)}; \phi^{(j)}) \) for each \( \D^{(i)} \in \mathcal{B} \);
            \State Select the top \( \lfloor (1-\alpha) b \rfloor \) 
             datasets with the highest risk, forming the subset \( \mathcal{B}_{\alpha} \);
            \State Approximate the loss $J_{\alpha}$ using the subset $\mathcal{B}_{\alpha}$ as $\tilde{J}_\alpha$;
            \State Compute the gradient \( \nabla_{\phi} \tilde{J}_\alpha \) over the batch \( \mathcal{B}_{\alpha}\);
            \State Update parameters: \( \phi^{(j+1)} \gets \phi^{(j)} - \eta \nabla_{\phi} \tilde{J}_\alpha \).
        \EndFor
        
        \State \Return Optimized parameters \( \hat{\phi} \)
    \end{algorithmic}
\end{algorithm}
Properties of the quantile estimation strategy in Algorithm~\ref{alg:robust_training} are reported in the following Lemma from \cite{wang2023simpleyet}.
Before stating the lemma, we define the Monte Carlo approximation of the $\VARa$:
\begin{equation}
    \tilde{V}[\mathcal{L}_{\phi}] := \min_{i = 1,\ldots,\lfloor(1-\alpha) b \rfloor} \{\ell(\D^{(i)};\phi)|\D^{(i)}\in\mathcal{B}_{\alpha}\}.
    \end{equation}
\begin{lemma} 
    Suppose the following assumptions hold:
\begin{description}
    \item[(I)]\label{first} the risk function \( \ell(\D; \phi) \) is Lipschitz continuous \emph{w.r.t.}  \( \phi \),
    \item[(II)]\label{second}  the CDF \( \cumul \) is Lipschitz continuous \emph{w.r.t.} \( l \) with Lipschitz constant $\beta_\ell$,
    \item[(III)]\label{third} the Monte Carlo $\VARa$ approximation error  is bounded, i.e. $|\tilde{V}[\mathcal{L}_{\phi}]-\VARa| \leq \frac{\eta}{\beta_\ell(1-\alpha)^2}$ at every iteration of Algorithm \ref{alg:robust_training} ,
    \item[(IV)]\label{fourth} the PDF \( p_{\alpha}(\D; \phi) \) is Lipschitz continuous \emph{w.r.t.} \( \phi \),
    \item[(V)]\label{fifth} the risk function \( \ell(\D; \phi) \) over the dataset realizations drawn from \( p_{\alpha}(\D; \phi) \) is bounded for any \( \phi \), s.t. $\norm{\phi}\leq C<+\infty$\footnotemark[3], with $C\in \mathbb{R}$.
\end{description}
\footnotetext[3]{The boundedness restriction on the parameter $\phi$ was not explicitly stated in~\cite{wang2023simpleyet}, but it is actually needed for the Assumption (V).}
{Then Algorithm~\ref{alg:robust_training} converges to a local optimum of the robust minimization problem~\eqref{eq:robust_loss}, with the improvement guarantee
      $J_\alpha(\phi^{(j+1)})\leq J_\alpha(\phi^{(j)})$

      }
\label{lemma:local_opt_paper}
\hfill $\square$ 

\end{lemma}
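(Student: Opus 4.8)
The plan is to show that the parameter update in Algorithm~\ref{alg:robust_training} follows, up to an error of order $\eta$, a genuine descent direction of the tail loss $J_\alpha$, and then to invoke a first-order expansion to obtain the monotone improvement. First I would recast $J_\alpha(\phi)$, which by~\eqref{eq:robust_loss} coincides with $C\VARa$ up to the $\phi$-independent constant $K$, through the Rockafellar--Uryasev variational identity~\cite{rockafeller2000opt},
\begin{equation*}
C\VARa = \min_{\tau\in\R}\Bigl\{\tau + (1-\alpha)^{-1}\,\E_{p(\D)}\bigl[(\ell(\D;\phi)-\tau)_+\bigr]\Bigr\},
\end{equation*}
whose minimizer is $\tau^\star(\phi)=\VARa$. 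Since assumption~(II) makes $\cumul$ continuous, the tail mass $\mathcal{Z}_\alpha(\phi)$ in~\eqref{eq:updated_prob} equals $1-\alpha$ independently of $\phi$, and applying the envelope theorem to the inner minimization annihilates the contribution of $\partial_\phi\tau^\star$. This yields the clean expression $\nabla_\phi J_\alpha(\phi) = \E_{p_\alpha(\D;\phi)}[\nabla_\phi\ell(\D;\phi)]$, i.e. exactly the population counterpart of the gradient $\nabla_\phi\tilde{J}_\alpha$ assembled in steps~4--6, with the implicit dependence of the tail support on $\phi$ either removed by the envelope argument or, equivalently, controlled through the Lipschitz bound of assumption~(IV).

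The second step is to quantify the gap between the estimated gradient $\nabla_\phi\tilde J_\alpha$ and the true $\nabla_\phi J_\alpha$. The batch gradient selects the top $\lfloor(1-\alpha)b\rfloor$ tasks using the empirical quantile $\tilde V[\lossfree]$, whereas the population gradient integrates over the region $\{\ell\geq\VARa\}$. The symmetric difference of these two regions carries probability mass at most $\beta_\ell\,\lvert\tilde V[\lossfree]-\VARa\rvert$ by the Lipschitz-CDF assumption~(II); invoking the quantile-error bound~(III) this mass is at most $\eta(1-\alpha)^{-2}$. On this mismatch set the per-task gradient is uniformly bounded, as a consequence of the Lipschitz property~(I) and the boundedness~(V) on $\{\norm{\phi}\leq C\}$, so the resulting bias in the gradient estimate is of order $\eta$; the remaining Monte Carlo fluctuation from using $b$ samples is independent of this deterministic term.

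Finally I would combine these pieces through a first-order expansion of $J_\alpha$ along the update $\phi^{(j+1)}=\phi^{(j)}-\eta\nabla_\phi\tilde J_\alpha$. Writing $\nabla_\phi\tilde J_\alpha=\nabla_\phi J_\alpha + e_j$ with $\norm{e_j}=O(\eta)$ from the previous step, one obtains
\begin{equation*}
J_\alpha(\phi^{(j+1)}) = J_\alpha(\phi^{(j)}) - \eta\,\norm{\nabla_\phi J_\alpha(\phi^{(j)})}^2 - \eta\,\langle \nabla_\phi J_\alpha,\, e_j\rangle + O(\eta^2),
\end{equation*}
and since the cross term and the remainder are both $O(\eta^2)$, for $\eta$ small enough the dominant contribution $-\eta\norm{\nabla_\phi J_\alpha}^2\le 0$ secures the improvement guarantee $J_\alpha(\phi^{(j+1)})\le J_\alpha(\phi^{(j)})$, with equality only at stationary points where $\nabla_\phi J_\alpha=0$. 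Monotonicity together with the boundedness of $\ell$ over the tail support (assumption~(V)) then forces the iterates to a local optimum of~\eqref{eq:robust_loss}.

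I expect the main obstacle to be the second step: rigorously turning the scalar quantile-error bound~(III) into an $O(\eta)$ bound on the \emph{vector} gradient bias. This requires carefully pairing the Lipschitz-CDF control of the mismatch probability~(II) with the uniform gradient bound from~(I) and~(V), while simultaneously justifying that the $\phi$-dependence of the tail distribution $p_\alpha(\D;\phi)$ contributes no first-order term --- the point at which assumption~(IV), and the envelope-theorem structure of CVaR, are indispensable.
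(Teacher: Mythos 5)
You should first be aware that the paper never proves Lemma~\ref{lemma:local_opt_paper}: it is stated as imported verbatim from~\cite{wang2023simpleyet}, and the only proof supplied in the paper (that of Proposition~\ref{prop:local_opt}) merely shows that a different, more interpretable set of assumptions implies Assumptions (II) and (V) of the Lemma. There is therefore no in-paper argument to compare your route against. Taken on its own terms, the architecture you propose --- the Rockafellar--Uryasev representation of $C\VARa$, an envelope/continuity argument to remove the $\phi$-dependence of the tail support, a gradient-bias bound obtained by pairing the Lipschitz-CDF assumption (II) with the quantile-error bound (III) to control the mass of the mismatch region, and a first-order descent expansion --- is the standard and natural one for this family of results, and the individual bound $\beta_\ell\lvert\tilde V[\lossfree]-\VARa\rvert\le \eta(1-\alpha)^{-2}$ is used correctly.

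Two steps nevertheless have genuine gaps. First, your final expansion only yields $J_\alpha(\phi^{(j+1)}) \le J_\alpha(\phi^{(j)}) - \eta\norm{\nabla_\phi J_\alpha}^2 + O(\eta^2)$; because the gradient bias $e_j$ is itself $O(\eta)$, the neglected terms are of the same order as $\eta\norm{\nabla_\phi J_\alpha}^2$ whenever $\norm{\nabla_\phi J_\alpha}^2 = O(\eta)$, which is exactly the regime near a stationary point where the monotone improvement is supposed to drive convergence. As written you obtain descent only while the iterate is far from stationarity, i.e.\ convergence to an $O(\sqrt{\eta})$-neighbourhood of a critical point, not the unconditional per-iteration inequality $J_\alpha(\phi^{(j+1)})\le J_\alpha(\phi^{(j)})$ claimed in the statement; closing this requires either restricting to iterates with $\norm{\nabla_\phi J_\alpha}^2$ above the bias floor or weakening the conclusion. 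Second, you dismiss the finite-$b$ Monte Carlo fluctuation of $\nabla_\phi\tilde J_\alpha$ around $\E_{p_\alpha(\D;\phi)}[\nabla_\phi\ell(\D;\phi)]$ as ``independent of this deterministic term'' and never control it: Assumption (III) bounds only the scalar quantile error, not the sampling error of the gradient averaged over the $\lfloor(1-\alpha)b\rfloor$ selected tasks, so a deterministic improvement guarantee cannot follow without an additional concentration argument (in which case the conclusion holds only with high probability) or an extra assumption folding this error into the same $O(\eta)$ budget. A more minor point is that the second-order Taylor expansion presupposes a Lipschitz gradient for $J_\alpha$, which is not among the hypotheses and would have to be derived from (I) and (IV).
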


{Among the assumptions in Lemma 3.1, the strongest are (V) and (III). Assumption (V) is relatively mild, especially given the Lipschitz condition (I). In contrast, Assumption (III) is the most restrictive. Unlike standard Monte Carlo estimation, which typically requires only bounded variance, this condition enforces a direct bound on the estimation error at each iteration. While stronger, it remains reasonable in our context. In particular, the bound becomes looser (i.e., more permissive) for larger $\alpha$, where the tail quantiles are harder to approximate. For small $\alpha$, the bound is tighter, but the approximation is easier due to the statistical efficiency of Monte Carlo estimation in this regime.}
In the following we adapt the assumptions of Lemma~\ref{lemma:local_opt_paper} to make them suited for the Transformer architecture used in~\cite{rufolo2024enhanced} (and also adopted in this work) to describe the meta model $\free$.
\begin{proposition}
{Suppose that, in addition to Assumptions (I), (III), and (IV) of Lemma~\ref{lemma:local_opt_paper}, the following hold:
\begin{description}
    \descitem{(1)} \label{item3} there exists \( M \in \mathbb{R} \) such that \( \sup_{\D \in \mathbb{D}} p(\D) \leq M \),
    \descitem{(2)} \label{item4}the measure of \( \{ \D \in \mathbb{D} \mid \ell(\D; \phi) = L \} \) is bounded for all \( L \in \mathbb{R} \)
    \descitem{(3)} \label{item5}the standard deviation in the parametrized distribution \( q_{\theta(\phi,X)} \) in \eqref{eq:robust_loss} is nonzero,
    \descitem{(4)} \label{item6}the activation functions used in the Transformer describing the meta model are Lipschitz continuous with respect to the processed input for any \( \phi \), s.t.  $\norm{\phi}\leq C<+\infty$., with $C\in \mathbb{R}$,
    \descitem{(5)} \label{item7}every input processed by Layer normalization  in the Transformer has nonzero standard deviation for any \( \phi \) s.t. $\norm{\phi}\leq C<+\infty$, with $C\in \mathbb{R}$.
\end{description}

 Then the statement of Lemma~\ref{lemma:local_opt_paper} holds. 
  }
\label{prop:local_opt}
\hfill $\square$
\end{proposition}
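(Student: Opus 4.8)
\emph{Proof sketch.} The plan is to read Assumptions~(1)--(5) as architecture-aware sufficient conditions that recover exactly the two hypotheses of Lemma~\ref{lemma:local_opt_paper} which are \emph{not} carried over verbatim, namely (II) (Lipschitz continuity of the CDF) and (V) (boundedness of the risk over $p_\alpha$). Since (I), (III) and (IV) are assumed directly, it suffices to prove that (1) and~(2) together imply (II), and that (3), (4) and~(5) together imply (V); the conclusion of Lemma~\ref{lemma:local_opt_paper}, including the descent guarantee $J_\alpha(\phi^{(j+1)})\le J_\alpha(\phi^{(j)})$, then follows unchanged.

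First I would establish (II). For $l_1<l_2$ the increment $F_{\ell}(l_2)-F_{\ell}(l_1)$ equals the $p(\D)$-mass of the slab $\{\D\in\mathbb{D}\mid l_1<\ell(\D;\phi)\le l_2\}$, which by Assumption~(1) is at most $M$ times the base measure of that slab. The remaining task is to show this measure grows at most linearly in $l_2-l_1$. I would do this by passing to the push-forward of $p(\D)\,d\D$ under the map $\lossfree\colon\mathbb{D}\to\R$ and arguing that this push-forward is absolutely continuous with a uniformly bounded density: expressing the density via a coarea/level-set decomposition, Assumption~(2) controls the measure of each level set $\{\ell(\D;\phi)=l\}$ and Assumption~(1) caps $p(\D)$ by $M$, which together bound the push-forward density and hence yield a finite Lipschitz constant $\beta_\ell$. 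In particular Assumption~(2) rules out atoms, so $F_{\ell}$ is continuous.

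Next I would verify (V). From~\eqref{eq:risk_function}, $\ell=\tfrac12\log(2\pi\sigma)+\tfrac12\big((\tilde{y}_{\nin+1:n}-\mu)/\sigma\big)^2+K$, so a uniform bound on $\ell$ follows once $\sigma$ is bounded away from zero and both $\mu$ and $\sigma$ stay in a compact set. Assumption~(3) supplies $\sigma>0$, removing the $\log\sigma$ and $1/\sigma^2$ singularities. To control $(\mu,\sigma)=\free(X)$ I would decompose the Transformer into its constituent maps and show that, on the parameter ball $\norm{\phi}\le C$, it is Lipschitz in its input: by Assumption~(4) the activation functions are Lipschitz, and by Assumption~(5) every Layer-normalization block acts on inputs of nonzero standard deviation and is therefore locally Lipschitz. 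A finite composition of Lipschitz maps is Lipschitz, so $\free$ sends a bounded input domain into a compact output set, whence $\ell$ is bounded for datasets drawn from $p_\alpha(\D;\phi)$ and (V) holds.

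The hardest part will be the Layer-normalization step: unlike the other components it divides by the empirical standard deviation and is \emph{not} globally Lipschitz, which is precisely why Assumption~(5) is needed to confine the argument to the region where it is well behaved; the analogous $1/\sigma^2$ and $\log\sigma$ blow-ups of the loss are tamed by Assumption~(3). A second, more delicate point is upgrading the pointwise level-set bound of Assumption~(2) to the linear slab-growth required by (II), where the coarea-type argument must be made rigorous (in particular handling regions where $\ell$ has small gradient). Once (II) and (V) are in hand, Lemma~\ref{lemma:local_opt_paper} delivers convergence of Algorithm~\ref{alg:robust_training} to a local optimum of~\eqref{eq:robust_loss} with the stated monotone improvement.
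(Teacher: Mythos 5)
Your proposal is correct and follows essentially the same route as the paper: it derives Assumption (II) of Lemma~\ref{lemma:local_opt_paper} by bounding the density of the push-forward of $p(\D)$ under $\lossfree$ using Assumptions~(1) and~(2) (the paper writes this as $p_{\ell}(L)\leq M\int_{\mathbb{D}}\delta(L-\ell(\D,\phi))\,d\D$), and Assumption (V) by combining the positivity of $\sigma$ from Assumption~(3) with layer-by-layer Lipschitz continuity of the Transformer on the bounded input domain, invoking Assumptions~(4) and~(5) exactly where you do. Your flagged delicate points (Layer normalization not being globally Lipschitz, and the gap between the pointwise level-set bound and the slab-growth estimate) are real, and if anything your coarea-based framing of the second one is slightly more careful than the paper's delta-function shortcut.
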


\begin{proof}[Proof of \textit{Proposition~\ref{prop:local_opt}}]

{Since Assumptions (I), (III), and (IV) are already shared with Lemma~\ref{lemma:local_opt_paper}, it only remains to show that Assumptions (1)–(5) imply Assumptions (II) and (V) of the Lemma.}

We now prove that Assumption (II) in Lemma~\ref{lemma:local_opt_paper} holds under the assumptions of Proposition~\ref{prop:local_opt}. Given \( L_1, L_2 \in \mathbb{R}^+ \), assuming without loss of generality $L_2 \geq L_1$, we have:
\begin{equation*}
    |F_{\ell}(L_2; \phi) - F_{\ell}(L_1; \phi)| \leq \sup_{l \in \mathbb{R}} p_{\ell}(l) (L_2 - L_1).
\end{equation*}
We need to prove that $\sup_{L \in \mathbb{R}} p_{\ell}(L)$ is bounded.  
Since the map $\lossfree$ in \eqref{eq:meta_operator} is deterministic, the only source of randomness in $\ell$ is the random variable $\D$. Thus, we obtain:
\begin{align}
p_{\ell}(L) &= \int_{ \mathbb{D}} p(\D) p(L|\D) \, d\D 
= \int_{\mathbb{D}} p(\D) \delta(L - \ell(\D, \phi)) \, d\D \nonumber \\
&\leq M \int_{\mathbb{D}} \delta(L - \ell(\D, \phi)) \, d\D
\label{bound_pl}
\end{align}
where ${\phi}$ is fixed from the definition in \eqref{eq:meta_operator} and the last inequality follows from Assumption~\eqref{item3}. From inequality~\eqref{bound_pl} and  Assumption~\eqref{item4}, $p_{\ell}(L)$ is bounded for all $L\in\mathbb{R}$, thus \(  \sup_{L \in \mathbb{R}} p_{\ell}(L) \) is bounded and  \( \cumul \) is Lipschitz continuous \emph{w.r.t.} \( \ell \). Therefore, Assumption (II) of Lemma \ref{lemma:local_opt_paper} is satisfied.

We now prove that, under the assumptions of Proposition~\ref{prop:local_opt}, Assumption (V) is satisfied, which states that the risk function \(\ell(\D; \phi) \) evaluated over realizations of \( p_{\alpha}(\D; \phi) \) is bounded for any \( \phi \) s.t. $\norm{\phi}<+\infty$. From the definition of \(\ell(\D; \phi) \) in~\eqref{eq:risk_function}, Assumption (V) is satisfied if: 
\begin{equation}
    0 < \sigma^2_{\phi}(X) < +\infty, \quad |\tilde{y} - \mu_{\phi}(X)| < + \infty.
    \label{eq:boundness}
\end{equation}
Using Assumption~\eqref{item5} and since the true output \(\tilde{y}\) is bounded, proving that \eqref{eq:boundness} holds only requires that \(\mu_{\phi}(X)\) and \( \sigma_{\phi}(X) \) are bounded for any $X$. For this, it is sufficient to show that the Transformer architecture is Lipschitz continuous in \( X \). Indeed, if this condition holds, then, since all inputs are bounded, the corresponding output parameters remain bounded for any fixed \( \phi \) s.t. $\norm{\phi}<+\infty$. Since the inputs lie within a bounded domain, it is sufficient to verify local Lipschitz continuity \emph{w.r.t.} \( X \), which holds provided that all Transformer layers satisfy this property. In fact:
\begin{itemize}
    \item linear layers are inherently Lipschitz continuous;
    \item positional encodings are trivially Lipschitz continuous;
    \item recurrent neural networks and multilayer perceptrons (MLPs) are Lipschitz continuous, as established in \cite{qi2023lipsformer};
    \item the attention mechanism has a local Lipschitz bound proportional to the square root of its input length \cite{castin2024how};
    \item layer normalization is Lipschitz continuous under Assumption~\eqref{item7}, as shown in \cite{qi2023lipsformer}.
\end{itemize}
Using also Assumption~\eqref{item6}, the Transformer is Lipschitz continuous \emph{w.r.t.} its inputs, ensuring that the outputs \( \mu_{\phi}(X) \) and \( \sigma_{\phi}(X) \) remain bounded for any \( \phi \) s.t. $\norm{\phi}\leq C<+\infty$.

In summary, the assumptions of Proposition~\ref{prop:local_opt} imply those of Lemma~\ref{lemma:local_opt_paper}. Therefore, the conclusion of Lemma~\ref{lemma:local_opt_paper} holds under the assumptions of Proposition~\ref{prop:local_opt}.  
\end{proof}

{Proposition~\ref{prop:local_opt} aims to tailor the assumptions of Lemma~\ref{lemma:local_opt_paper} in a form more interpretable within a probabilistic in-context learning framework, specifically using a Transformer-based meta model $\free$. While its assumptions are stricter than those in Lemma~\ref{lemma:local_opt_paper}, they offer clearer practical implications. Notably, several of the revised assumptions are readily satisfied in practice. For example, Assumption~\eqref{item5} holds when the predicted standard deviation, provided as output by the meta model, is constrained to be positive via an exponential function~\cite{rufolo2024enhanced}, and Assumption~\eqref{item6} is met by common deep learning  function activations. This refinement enhances practical relevance without weakening the theoretical foundation.}

\section{Numerical examples}
\label{sec:example}
The proposed tail loss optimization strategy is used to train a meta model using synthetic data from a class of dynamical systems with a Wiener-Hammerstein (WH) structure. The meta model adopts the Transformer architecture from~\cite{rufolo2024enhanced}, comprising approximately 5.5 million parameters. Key hyperparameters include an initial condition length $\nin = 30$, query length $n - \nin = 100$, and context length $m = 400$. All experiments were repeated across different random initializations and dataset realizations, consistently yielding similar results. Training was conducted on an Nvidia RTX 3090 GPU. {To ensure reproducibility, the complete codebase is available at \href{https://github.com/mattrufolo/sysid-robust-transformer}{https://github.com/mattrufolo/sysid-robust-transformer}, which also contains an extended version of the paper with additional analyses and results.}
\subsection{System Class and Task Distribution}
The WH systems used for meta-model training follow a cascade structure \( G_1\!-\!F\!-\!G_2 \), where \( G_1 \) and \( G_2 \) are discrete-time LTI systems and \( F \) is a static nonlinearity. The LTI components have randomly generated orders in \( [1, 10] \), pole magnitudes in \( (0.5, 0.97) \), and phases in \( (0, \pi/2) \).
 The nonlinearity \( F \) is modeled {as a MLP with one hidden layer of 32 units and hyperbolic tangent activations}, with weights sampled from a normal distribution with Kaiming scaling.

Inputs consist of white noise signals drawn from standard Gaussian, and outputs are standardized to zero mean and unit variance. The output are then corrupted with measurement noise, modeled by white Gaussian noise with standard deviation $\sigma_{\text{noise}} = 0.01$.

\subsection{Learning Procedure}
For the initial phase of training, the loss function in~\eqref{eq:J_prob} is minimized using the AdamW optimizer over 100,000 iterations, with a batch size $b = 32$.  From this checkpoint, two distinct training procedures are resumed for additional 700,000 iterations: \textit{standard} training (solving~\eqref{eq:vanilla_loss}) and proposed \textit{robust} training (solving~\eqref{eq:robust_loss}). For standard training, a batch of $b=32$ tasks is sampled at each iteration. In contrast, for robust training, a batch of $b = 80$ tasks is generated and only the 32 tasks ($\alpha = 0.4$) with the highest loss values  are selected for training. {All the other hyperparameters follow standard machine learning practice; more details are provided in the accompanying GitHub repository.} 

{Notably, robust training incurs a longer run time, approximately 29 hours compared to 17.3 hours for standard training. This increase stems mainly from: (i) additional forward passes to evaluate all tasks' risk in each batch before backpropagation, and (ii) the higher cost of dataset generation, performed on the CPU, which accounts for about 36\% of the time difference (roughly 4.2 out of 11.7 hours).}

We remark that in~\cite{wang2023simpleyet}, training starts directly with the robust procedure. However, we found a two-stage training procedure more efficient. In fact, roughly speaking, only after some iterations the trained meta model can discriminate tasks with high and low loss. To enable a direct comparison with a one-stage approach, we also perform training from scratch using the robust procedure. The one-stage and two-stage approaches achieve comparable predictive performance. However, the two-stage approach is 20\% faster than training from scratch (32 {vs 40 hours}).

All the results presented in the rest of the section are obtained in a zero-shot fashion, without any model re-training.

\subsection{In-Distribution Analysis}
The ID performance of standard and robust training is evaluated on a test batch of $b = 20{,}000$ tasks sampled from the same distribution used during training. Table~\ref{tab:in-distribution} reports the average root mean square error (RMSE) over the full test batch, the average over the top 40\% most challenging tasks (8,000 with highest RMSE), and the median RMSE over all the tasks. For comparison, we also report a baseline assuming perfect knowledge of the WH system architecture (i.e., the order of the linear blocks and the structure of the static nonlinearity). For each realization, a separate WH model is fitted using nonlinear least squares (NLSQ). Due to computational cost, results are shown for a representative subset of 2,000 tasks; extending this to all 20,000 would require approximately 5 days.

\begin{table}[htb!]
    \centering
    \caption{RMSE on in-distribution test tasks.}
    \begin{tabular}{cccc}
        \toprule
        \textbf{Training} & \textbf{Test RMSE} & \textbf{Tail RMSE} & \textbf{Median RMSE} \\
        \midrule
        Standard  & 0.088 & 0.173 & 0.050 \\
        Robust    & 0.070 & 0.129 & 0.045 \\
        NLSQ      & 0.289 & 0.700    & 0.023    \\
        \bottomrule
    \end{tabular}
    \label{tab:in-distribution}
\end{table}

Robust training improves tail performance, sthe direct training objective, by over 25\%, while also reducing average RMSE across all tasks by more than 20\%. The higher RMSE of NLSQ is due to convergence issues on critical instances; however, its superior performance on the median confirms that it can outperform meta-models on well-conditioned tasks. The presence of critical outliers in NLSQ predictions is also evident in Fig.~\ref{fig:NSLQ_rmse}. Fig.~\ref{fig:WN_histogram} shows the RMSE distribution, with a zoomed view of the tails highlighting the improved robustness of the proposed approach.

\begin{figure}[htb!]
\centering
\includegraphics[width=.47\textwidth]{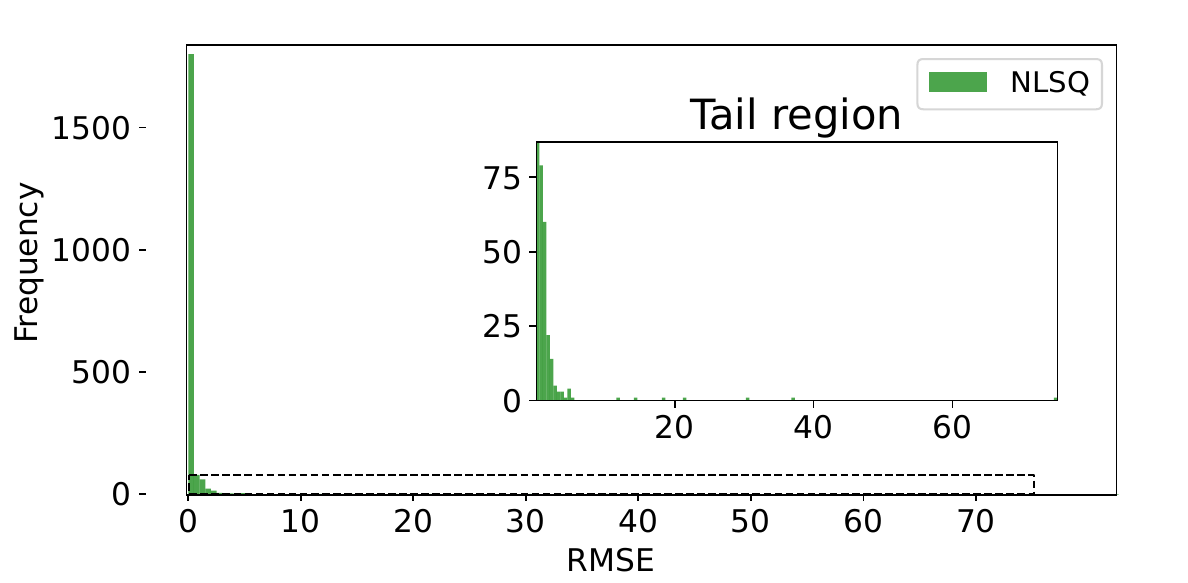}
\caption{In-distribution analysis: RMSE histogram over the test batch obtained with a classic System Identification approach, the non-linear least squares (NLSQ).
}
\label{fig:NSLQ_rmse}
\end{figure}

\begin{figure}[htb!]
\centering
\includegraphics[width=.47\textwidth]{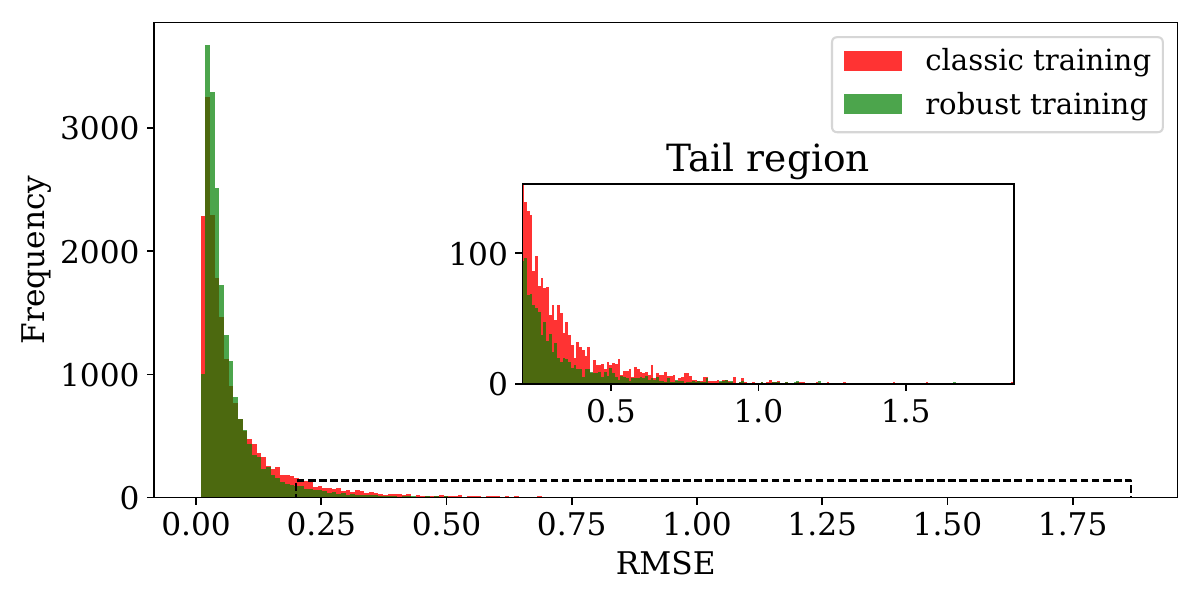}
\caption{In-distribution analysis: RMSE histogram over the test batch obtained with standard and robust training.
}
\label{fig:WN_histogram}
\end{figure}
\subsection{Out-of-Distribution Analysis}
To assess the generalization capability of the robust procedure, we evaluate its performance in two {OOD} scenarios: ($i$) same WH system class but with different input signals and ($ii$) a real-world benchmark from \href{www.nonlinearbenchmark.org}{www.nonlinearbenchmark.org}, where input signal and system dynamics differ from those used in training.\\

\noindent \textit{1) Wiener-Hammerstein System with Different Input Classes}  
We evaluate the performance of the proposed approach on a WH system with two different input signals: ($i$) a Random Binary Signal (RBS) and ($ii$) a random-phase multisine.
For both scenarios, a test batch of $b = 20000$ tasks is used. Table~\ref{tab:out-of-distribution-merged} presents the average and tail RMSE ($8000$ tasks) for both cases. The results indicate a consistent improvement in tail performance (by approximately $18\%$ for multisine and $10\%$ for RBS) and in average performance (by $14\%$ for multisine and $4\%$ for RBS). Fig.~\ref{fig:out-of-distribution} presents the RMSE histograms for the RBS scenario, highlighting the tail performance. This confirms that robust training enhances performance in both average and tail metrics in {OOD} scenarios, showing its generalization capability. \\

\begin{table}[htb!]
    \centering
    \caption{RMSE for out-of-distribution test tasks.}
    \begin{tabular}{cccc}
        \toprule
        \textbf{Input} & \textbf{Training} & \textbf{Test RMSE} & \textbf{Tail RMSE} \\
        \midrule
        RBS      & standard  & 0.232 & 0.604 \\
                  & robust    & 0.223 & 0.547 \\
        \midrule
        multisine & standard  & 0.151 & 0.292 \\
                  & robust    & 0.130 & 0.240 \\
        \bottomrule
    \end{tabular}
    \label{tab:out-of-distribution-merged}
\end{table}

\begin{figure}[htb!]
\centering
        \includegraphics[width=.47\textwidth]{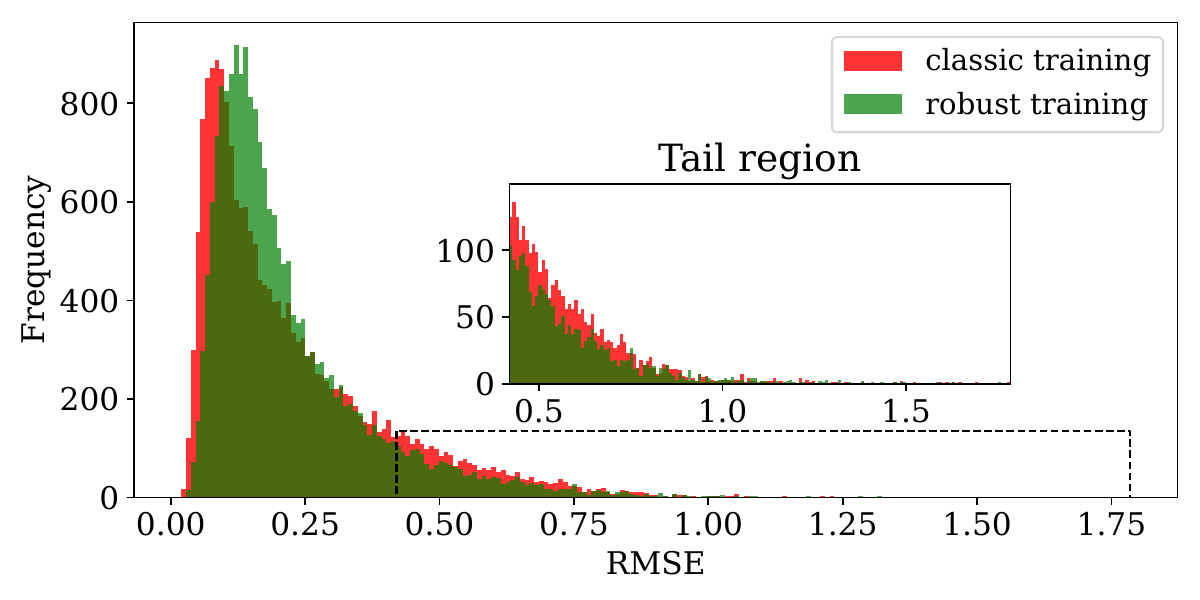}
        \label{fig:PRBS_comparison}  
    \caption{Out-distribution analysis (RBS): RMSE histogram over the test batch obtained with standard and robust training.}
    \label{fig:out-of-distribution}
\end{figure}

\noindent \textit{2) Silverbox Benchmark}

To further evaluate generalization, we test the trained meta models on the Silverbox SYSID benchmark~\cite{wigren2013SB}. The benchmark includes three test sets with real data, generated using different input signals:
\begin{itemize}
    \item a random-phase multisine;  
    \item an ``arrow'' signal with linearly increasing amplitude. The arrow signal is used to generate two test inputs:  
    \begin{itemize}
        \item \textit{arrow (no extrapolation)}: the initial portion of the arrow, where the amplitude remains within the amplitude range of the training input;  
        \item \textit{full arrow}: the complete arrow signal, whose amplitude exceeds that of the training in its last portion.  
    \end{itemize}
\end{itemize}

Since the meta model provides predictions only up to 100 time steps ahead,  
an \textit{iterative inference} approach is adopted to simulate the whole test datasets of the benchmark. The first inference step utilizes the $\nin$ initial conditions provided by the benchmark. In subsequent iterations, the final $\nin$ simulated samples from the previous step are used as initial conditions for the next iteration. This procedure is repeated for all non-overlapping chunks of length 100 within the test dataset. 

Additionally, this iterative inference approach is independently applied to all non-overlapping subsequences of length 400 of the Silverbox training dataset, which are processed by the meta model as context. The final RMSE values, averaged over all iterative inference iterations and training dataset subsequences, are reported in Table~\ref{tab:silverbox}.
\begin{table}[htb!]
    \centering
    \caption{RMSE for Silverbox benchmark test datasets.}
    \begin{tabular}{ccc}
        \toprule
        \textbf{Input} & \textbf{Training} & \textbf{Test RMSE {[V]}}  \\
        \midrule
        multisine      & standard  & 0.00928  \\
                      & robust    & 0.00837 \\
        \midrule
        arrow (no extrapolation) & standard  & 0.00685 \\
        & {robust} &    {0.00631} \\
        \midrule
        full arrow  & standard  & 0.01088 \\
                    & robust    & 0.01079 \\
        \bottomrule
    \end{tabular}
    \label{tab:silverbox}
\end{table}
We observe that robust training improves performance for \textit{multisine} (by approximately 10\%) and \textit{arrow (no extrapolation)} (by approximately 8\%) test inputs.
For \textit{full arrow}, performance is nearly identical between the standard and robust procedures. These results confirm that the robust training approach enhances generalization of the meta model.

\subsection{In-Depth Evaluation under equal Training Time and Dataset Budget}

To ensure a fair and comprehensive evaluation of the proposed robust approach, two additional comparison scenarios are considered. The results in the previous subsections focused on comparing the \emph{standard} and \emph{robust} training procedures, both after the same number of iterations (700{,}000). However, two other factors can be employed for comparison: (i) equal training time and (ii) equal number of datasets generated during training.

The second factor is particularly relevant because, in the \emph{robust} approach, a batch of 80 different systems is generated at each iteration, but only 32 (i.e., 40\%) are used during backpropagation. In contrast, the \emph{standard} approach generates a batch of 32 systems per iteration, all of which are used. As a result, although both approaches undergo the same number of training iterations in the initial comparison, the total number of generated datasets differs significantly.

To evaluate the implications of this difference, the trained meta models are tested across all previously introduced scenarios. Table~\ref{tab:data-analysis} reports the resulting test RMSE values, both over the entire batch (20,000 systems) and the upper tail (8,000 systems), for WH systems excited by white-noise, multisine, and RBS input signals, as well as for the Silverbox benchmark. The performance of the \emph{robust} model is included as a reference. For clarity, the best-performing RMSE values are shown in \textbf{bold}, and the second-best are \underline{underlined}.

\subsubsection{Training Time}

In this scenario, the \emph{standard} training from Section~\ref{subsec:learning_procedure} is extended to match the training time of the \emph{robust} approach. This results in a total of approximately 1.2 million training iterations. The corresponding performance is reported in the second row of Table~\ref{tab:data-analysis}.

\begin{table*}[htb!]
\small
    \centering
    \caption{RMSE over the test batch for different scenarios, and for different methods.}
    \begin{tabular}{ccccccc}
        \toprule \textbf{Method}&
        \textbf{Epochs} & \textbf{Training-time} & \textbf{Case} & \textbf{Test RMSE} & \textbf{Tail RMSE} \\
        \midrule
        Standard ICL&1.75M&40h&white-noise        & \textbf{0.065} & \textbf{0.121} \\
                  &&&multisine    & \textbf{0.119} & \textbf{0.232} \\
                  &&& SilverBox   & [\textbf{0.00820},\textbf{0.00630},0.01100] & \\
                  &&& RBS   & 0.365 & 0.854 \\
        \midrule
        Standard ICL&1.22M&29h&white-noise        & 0.075 & 0.141 \\
                  &&&multisine    & 0.132 & 0.253 \\
                  &&& SilverBox   & [0.00892,0.00665,\underline{0.01094}] & \\
                  &&& RBS   & \underline{0.342} & \underline{0.810} \\
         \midrule
        Robust ICL&{0.7M}&{29h}&white-noise     & \underline{0.070} & \underline{0.129} \\
                  &&&mutlisine     & \underline{0.130} & \underline{0.240} \\
                  &&& SilverBox& {[\underline{0.00837},\underline{0.00631},\textbf{0,01079}]}& \\
                  &&& RBS &\textbf{0.223} & \textbf{0.547} \\
        \bottomrule
    \end{tabular}
    \label{tab:data-analysis}
\end{table*}

From the results in Table~\ref{tab:data-analysis}, we observe that although the \emph{standard} ICL model is trained on approximately $74\%$ more data than in the previously analyzed scenario, the \emph{robust} approach consistently outperforms it across all benchmarks. The performance improvements range from marginal to significant, depending on the specific input scenario. In particular, when comparing this setting to the standard model trained with 700{,}000 iterations, we note that the increased data volume enhances performance in the in-distribution case and in certain out-of-distribution ones. However, in the most challenging OOD setting, characterized by a RBS input signal, the standard ICL model exhibits a marked performance drop, with an RMSE increase of nearly $50\%$ (approximately $47\%$).

We attribute this degradation to \emph{meta-overfitting}~\cite{hospedales2021meta,kirsch2022general}, wherein the meta model begins to memorize specific training instances rather than capturing the generalizable structure of the task distribution. This mirrors classical overfitting, but at the meta-learning level. Evidence for this phenomenon includes not only the degradation under RBS excitation but also the modest gains in the multisine and Silverbox benchmarks, whose input characteristics closely resemble the white-noise, dominated training distribution, supporting the hypothesis that the model overfits to this dominant input type.

\subsubsection{Dataset Budget}

In this scenario, the \emph{standard} training is continued for 1.75 million iterations, corresponding to the same total number of datasets generated in the \emph{robust} approach. This includes all systems generated during training, even those discarded in the robust setting due to low estimated risk. Specifically, since only $40\%$ of the systems are used for backpropagation in the \emph{robust} method, matching the total number of generated systems requires $1{,}750{,}000 = (700{,}000 / 0.4)$ iterations of the \emph{standard} training procedure.

The results for this setting are presented in the first row of Table~\ref{tab:data-analysis}. The observed trends are consistent with the previous comparison: while the \emph{standard} model slightly outperforms the \emph{robust} one in the ID and in some OOD cases, its performance deteriorates significantly in the most challenging OOD scenario (RBS), with an RMSE drop exceeding $50\%$ compared to the result in Table~\ref{tab:out-of-distribution-merged}.

To provide a more comprehensive view, Figure~\ref{fig:distribution_comparison} shows test RMSE histograms for the \emph{standard} ICL (trained for 1,750,000 iterations) and the \emph{robust} ICL (trained for 700,000 iterations) under white-noise, multisine, and RBS inputs. These results confirm the table-based analysis: while the \emph{standard} model performs slightly better under white-noise and multisine inputs, it suffers substantial meta-overfitting under RBS excitation, as reflected in the heavy tail of the RMSE histogram.

\begin{figure*}[htb!]
    \centering
    \begin{subfigure}[b]{0.32\textwidth}
        \centering
        \includegraphics[width=\textwidth]{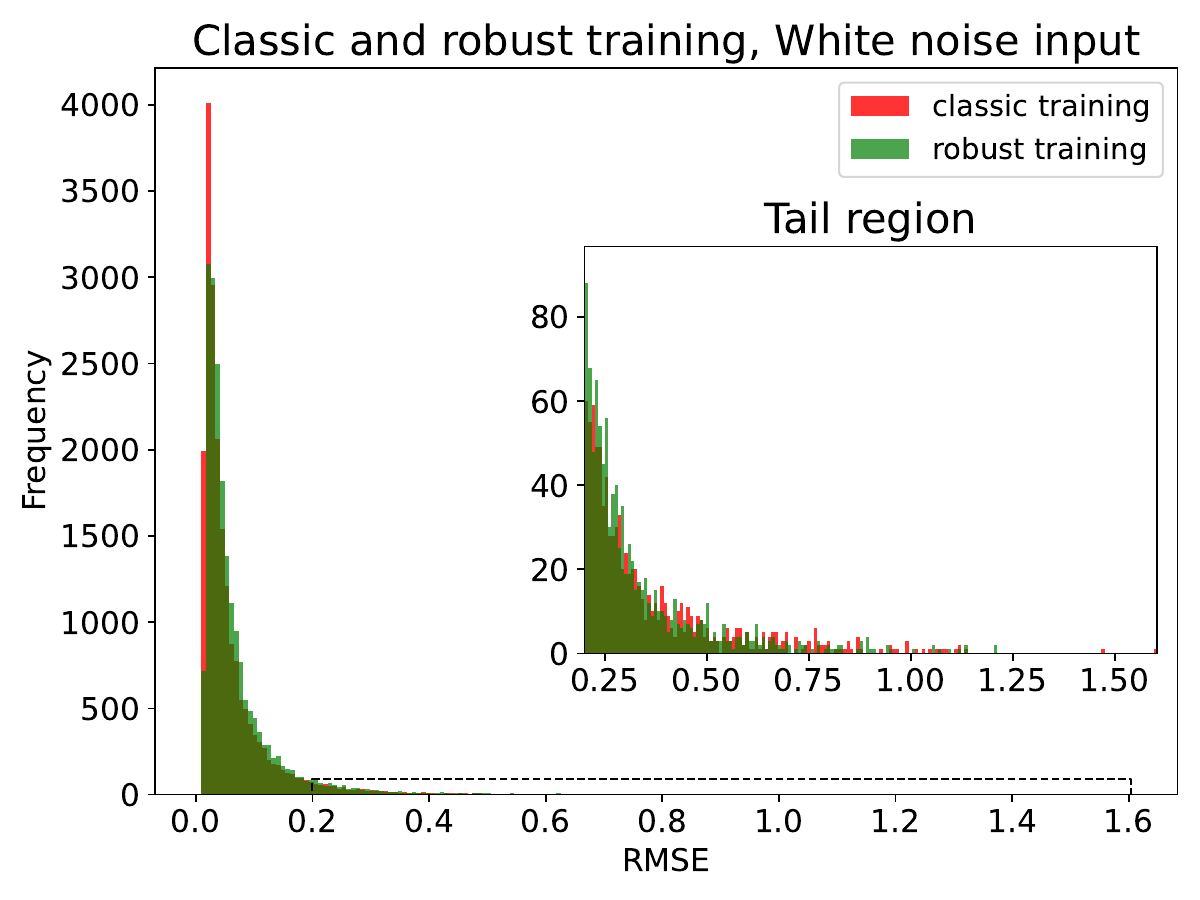}
        \caption{In-distribution with input white-noise}
        \label{fig:in_dist_white_noise}
    \end{subfigure}
    \hfill
    \begin{subfigure}[b]{0.32\textwidth}
        \centering
        \includegraphics[width=\textwidth]{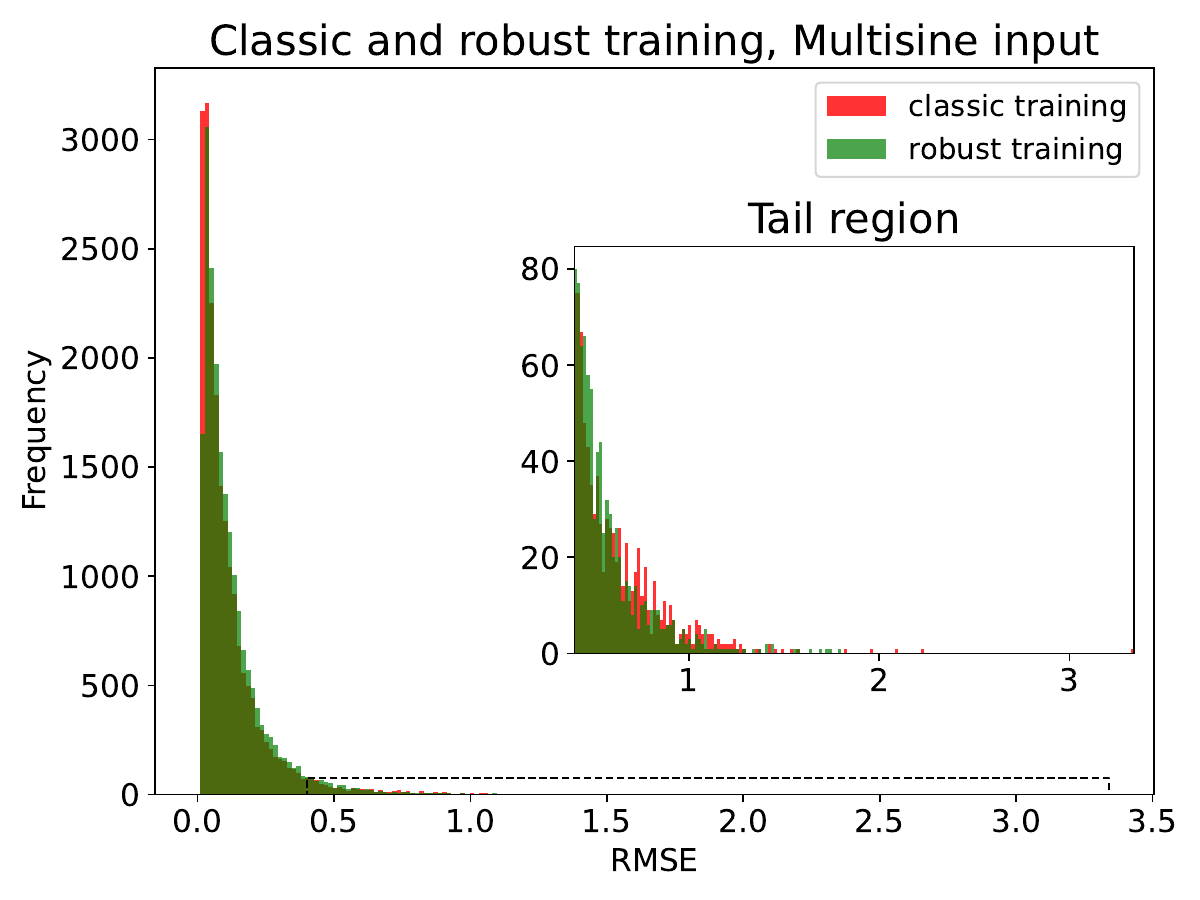}
        \caption{Out-of-distribution with multisine input}
        \label{fig:out_dist_multisine}
    \end{subfigure}
    \hfill
    \begin{subfigure}[b]{0.32\textwidth}
        \centering
        \includegraphics[width=\textwidth]{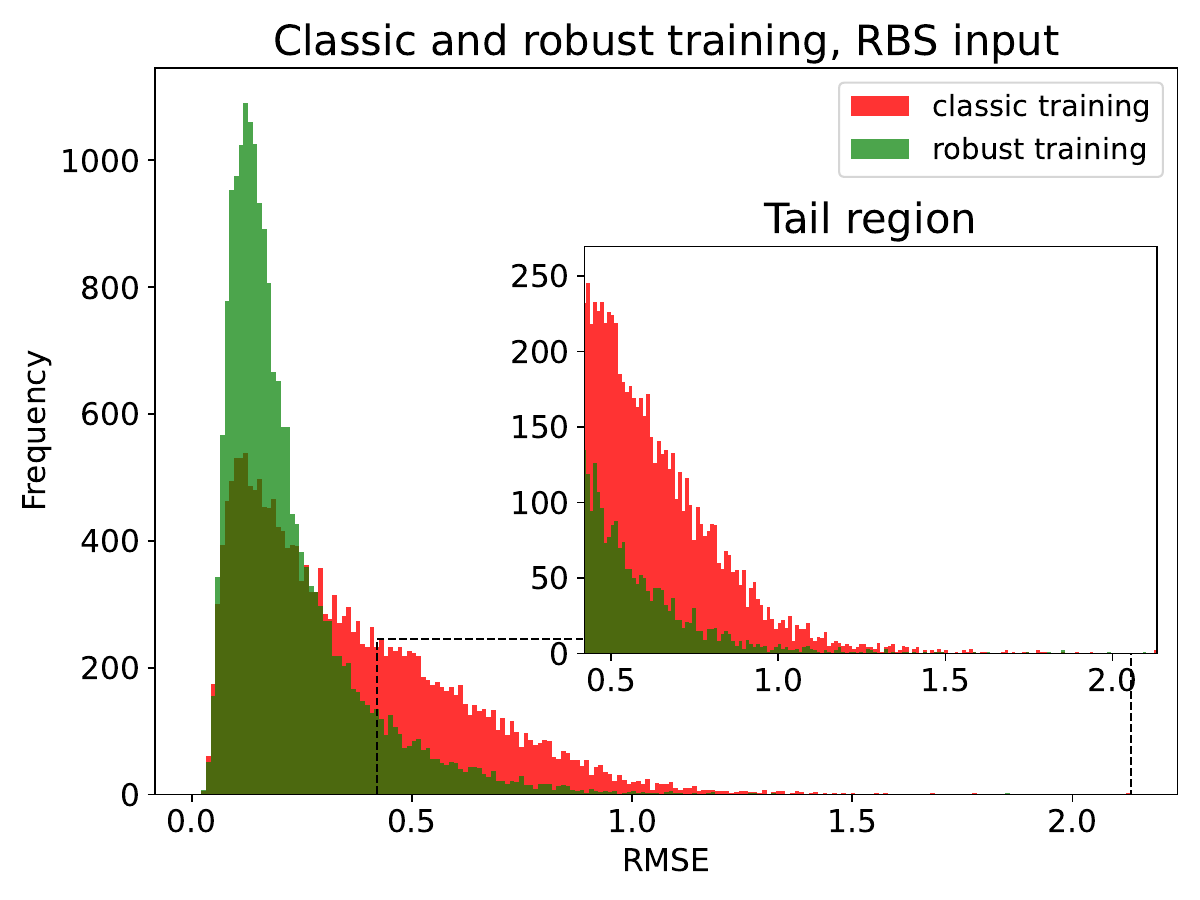}
        \caption{Out-of-distribution with RBS input}
        \label{fig:out_dist_rbs}
    \end{subfigure}
    \caption{Histogram analysis: RMSE over the test batch for different input types (in-distribution and out-of-distribution scenarios) comparing the standard approach trained for 1.75 millions of iteration with the robust one trained for 0.7 millions of iterations.}
    \label{fig:distribution_comparison}
\end{figure*}

\subsection{Role of Uncertainty}

To evaluate the impact of uncertainty estimation in~\eqref{eq:Nodel_free_sim_CI} within the robust learning framework, we conducted an additional experiment. In this setup, the robust training procedure was repeated using the RMSE as the risk function instead of the KL divergence introduced in~\eqref{eq:risk_function}, thereby disregarding the predicted $\sigma$ values. The results are summarized in Table~\ref{tab:data-analysis-rmse}, where the best test RMSE values are highlighted in \textbf{bold}.

\begin{table}[htb!]
\small
    \centering
    \caption{Test RMSE for different risk functions used in the robust ICL framework.}
    \begin{tabular}{lcccc}
        \toprule 
        \textbf{Risk} & \textbf{Case} & \textbf{Test RMSE} & \textbf{Tail RMSE} \\
        \midrule
        RMSE & White-noise        & 0.079 & 0.145 \\
             & Multisine          & 0.143 & 0.268 \\
             & SilverBox          & [9.27, 6.85, 10.88]e-3 & -- \\
             & RBS                & 0.263 & 0.679 \\
        \midrule
        KL   & White-noise        & \textbf{0.070} & \textbf{0.129} \\
             & Multisine          & \textbf{0.130} & \textbf{0.240} \\
             & SilverBox          & \textbf{[8.37, 6.31, 10.79]e-3} & -- \\
             & RBS                & \textbf{0.223} & \textbf{0.547} \\
        \bottomrule
    \end{tabular}
    \label{tab:data-analysis-rmse}
\end{table}

The results demonstrate that incorporating uncertainty estimates through $\sigma$ significantly enhances the robustness of the learning process. In particular, Figure~\ref{fig:rmse_comparison} reports the validation RMSE (computed on held-out tasks sampled every 2{,}000 iterations) over the full batch and over its ``tail'' (i.e., the worst-performing tasks). It is evident that the KL-based robust method consistently achieves lower validation errors compared to the RMSE-based counterpart.

\begin{figure*}[htb!]
    \centering
    \begin{subfigure}[b]{0.45\textwidth}
        \centering
        \includegraphics[width=\textwidth]{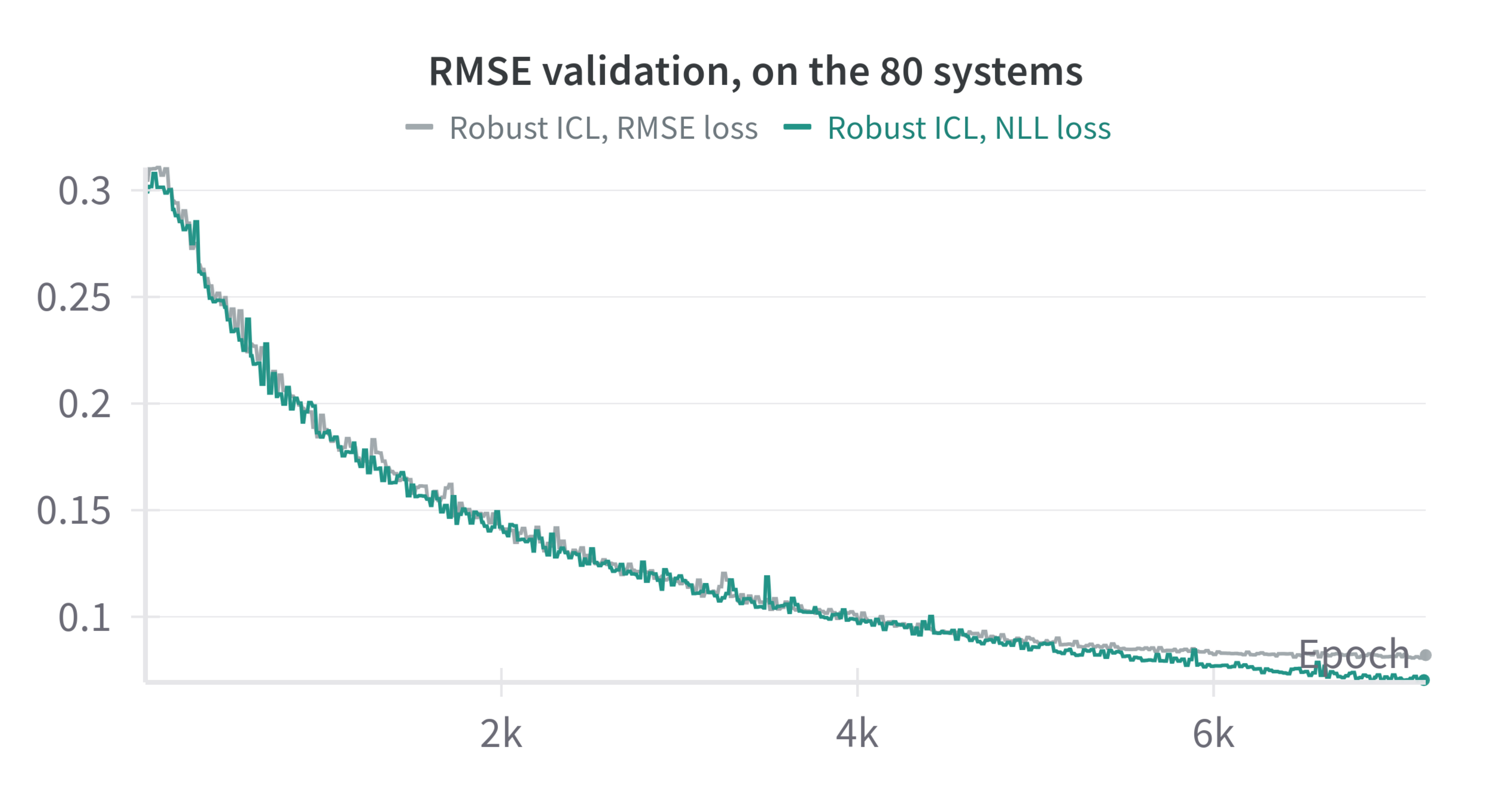}
        \caption{Validation RMSE over the entire batch}
        \label{fig:rmse_val}
    \end{subfigure}
    \hfill
    \begin{subfigure}[b]{0.45\textwidth}
        \centering
        \includegraphics[width=\textwidth]{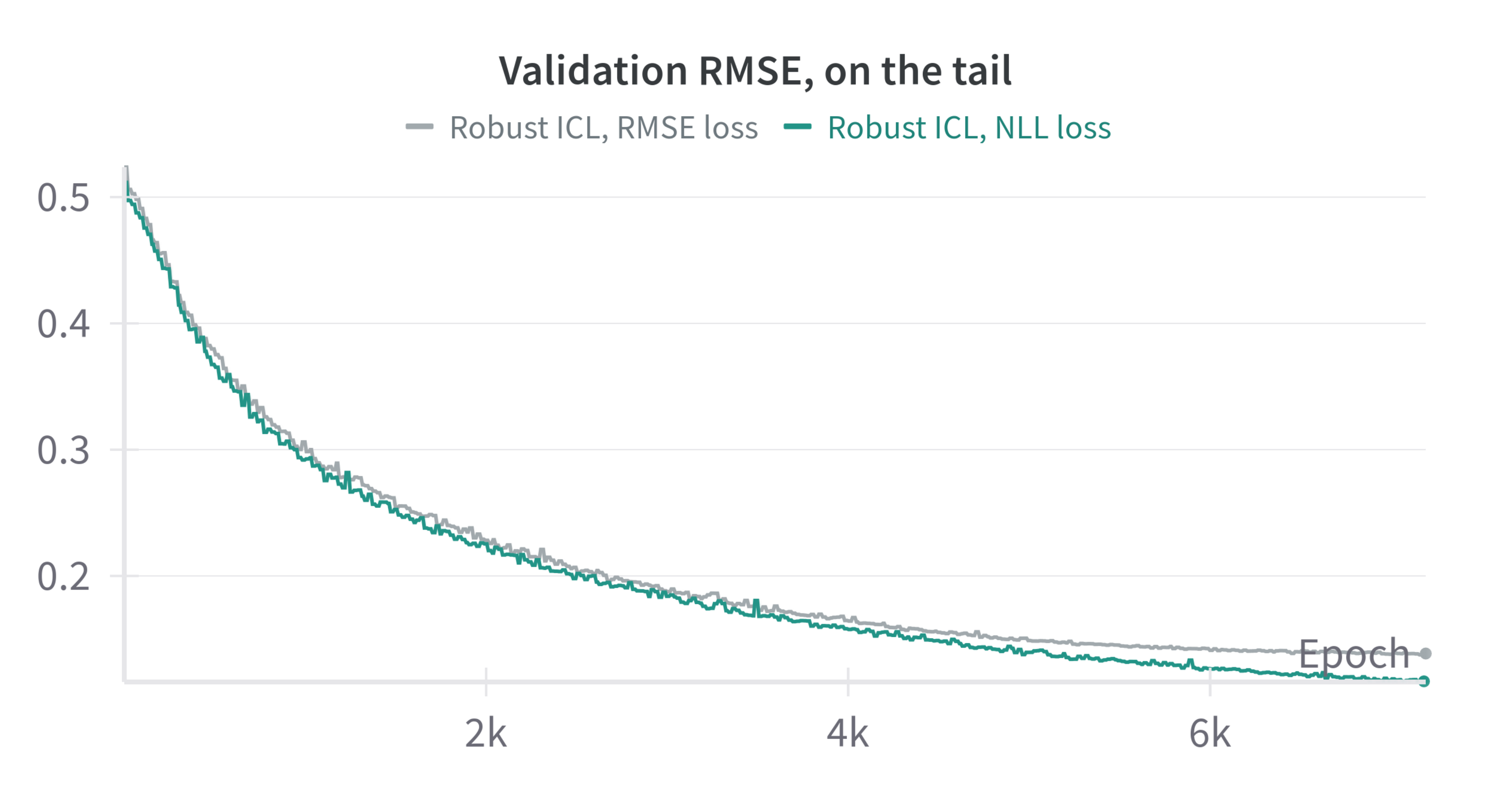}
        \caption{Validation RMSE over the batch tail}
        \label{fig:rmse_val_tail}
    \end{subfigure}
    \caption{Comparison of validation losses using robust ICL with RMSE-based and KL-based risk functions.}
    \label{fig:rmse_comparison}
\end{figure*}

While the Transformer-based meta model functions as a black box, one possible interpretation is that the KL risk does not merely penalize prediction errors, but also encourages the model to calibrate its confidence via the predicted uncertainty $\sigma$. This additional signal appears to guide the learning process toward more robust solutions, especially in heterogeneous and challenging task distributions.

These findings are particularly insightful, as they demonstrate that the effectiveness of the proposed robust approach is not solely due to focusing on high-error tasks. Instead, the choice of the risk function plays a pivotal role. The uncertainty-aware KL divergence introduces a second-order signal, confidence in predictions, that enables the meta model to assess the reliability of its outputs. This in turn facilitates more informed risk evaluation and allows the robust optimization paradigm to outperform standard error-based formulations.

\subsection{Tail tuning}
To complete the analysis, we conducted an tuning study to evaluate the impact of the hyperparameter \(\alpha\) used during robust training. Specifically, we compared the original setting \(\alpha_0 = 0.4\) (selecting the worst 32 out of 80 tasks) against two alternatives: \(\alpha_1 = 0.6\) (selecting the worst 36 out of 60 tasks) and \(\alpha_2 = 0.2\) (selecting the worst 32 out of 160 tasks). The goal of this analysis is twofold: to validate the initial choice of \(\alpha\), and to assess the sensitivity of the approach to this parameter. While a carefully chosen \(\alpha\) can enhance performance, an unsuitable value may significantly degrade accuracy, in which case classical training may be preferable.

Table~\ref{tab:data-analysis-alpha} reports the performance metrics for all three configurations, showing that \(\alpha_0\) consistently outperforms the others across most of the evaluated settings. These findings are further illustrated in Fig.~\ref{fig:tail_comparison}, which shows the RMSE distributions on test data under white-noise, multisine, and RBS inputs. The histograms not only confirm the numerical trends in the table but also reflect the expected behavior: smaller \(\alpha\) values (tighter tails) improve robustness in challenging cases but tend to sacrifice average-case performance.
\begin{table}[htb!]
\small
    \centering
    \caption{Test RMSE for different $\alpha$ used in the robust ICL framework.}
    \begin{tabular}{lcccc}
        \toprule 
        \textbf{Tail} & \textbf{Case} & \textbf{Test RMSE} & \textbf{Tail RMSE} \\
        \midrule
        $\alpha_0$   & White-noise        & \textbf{0.070} & \underline{0.129} \\
             & Multisine          & \textbf{0.130} & \underline{0.240} \\
             & SilverBox          & \textbf{[8.37, 6.31, 10.79]e-3} & -- \\
             & RBS                & \underline{0.223} & \underline{0.547} \\
        \midrule
        $\alpha_1$ & White-noise        & 0.079 & 0.134 \\
             & Multisine          & 0.141 & 0.246 \\
             & SilverBox          & [\underline{9.12}, \underline{6.51}, \underline{11.38}]e-3 & -- \\
             & RBS                & \textbf{0.213} & 0.552 \\
        \midrule
        $\alpha_2$ & White-noise        & \underline{0.076} & \textbf{0.125}\\
             & Multisine          & \underline{0.133} & \textbf{0.238} \\
             & SilverBox          & [10.1, 8.24, 11.64]e-3 & -- \\
             & RBS                & 0.252 & \textbf{0.492} \\
        \bottomrule
    \end{tabular}
    \label{tab:data-analysis-alpha}
\end{table}
This trade-off is also evident in the Silverbox benchmark. While \(\alpha_2 = 0.2\) yields the lowest overall performance (see Table~\ref{tab:data-analysis-alpha},  where the best-performing RMSE values are shown in \textbf{bold}, and the second-best are \underline{underlined}.), it performs best in the extrapolation portion of the task i.e., the region that lies outside the training amplitude. As shown in Fig.~\ref{silv_full_arrow}, although \(\alpha_2\) performs worse in the initial part of the trajectory, it yields superior predictions in the extrapolation region. This is also reflected in the RMSE scores computed exclusively over the extrapolation segment, where the values for \(\alpha_0\), \(\alpha_1\), and \(\alpha_2\) are 0.028, 0.030, and 0.024, respectively. This highlights the potential of the robust approach to generalize better in the most difficult, out-of-distribution regions, depending on the choice of \(\alpha\).
\begin{figure*}[htb!]
    \centering
    \begin{subfigure}[b]{0.32\textwidth}
        \centering
        \includegraphics[width=\textwidth]{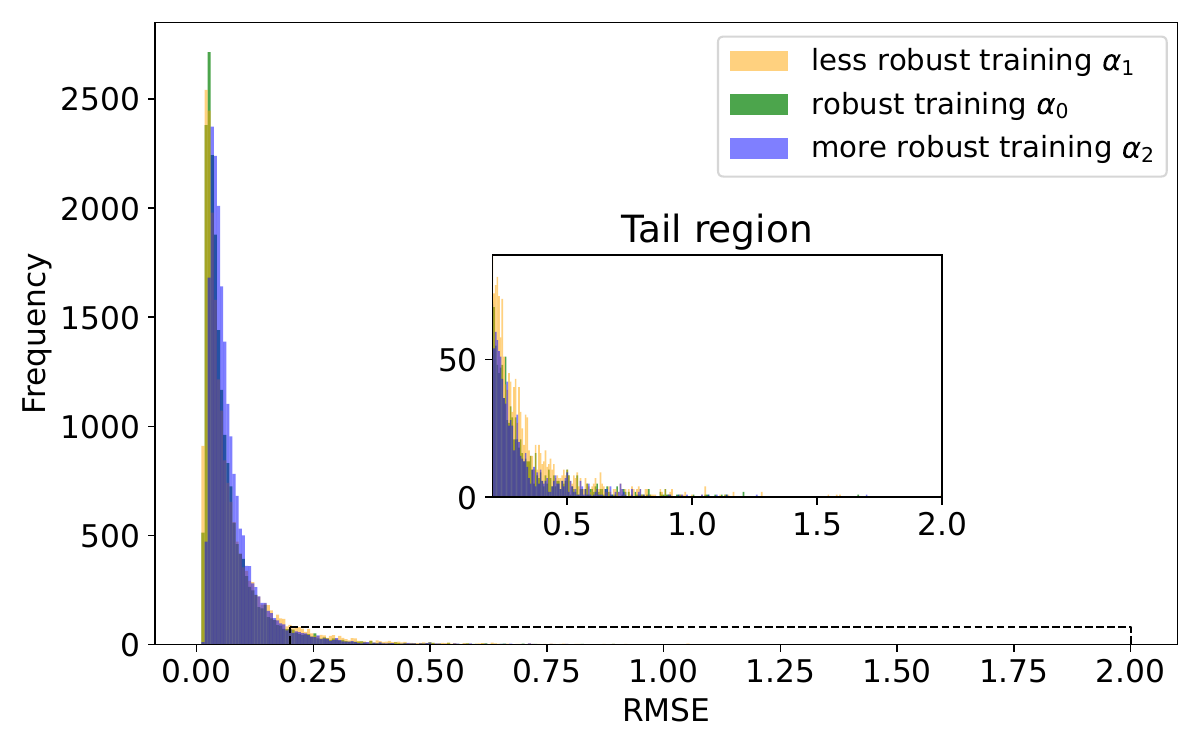}
        \caption{In-distribution with input white-noise}
        \label{fig:in_dist_WN_robust}
    \end{subfigure}
    \hfill
    \begin{subfigure}[b]{0.32\textwidth}
        \centering
        \includegraphics[width=\textwidth]{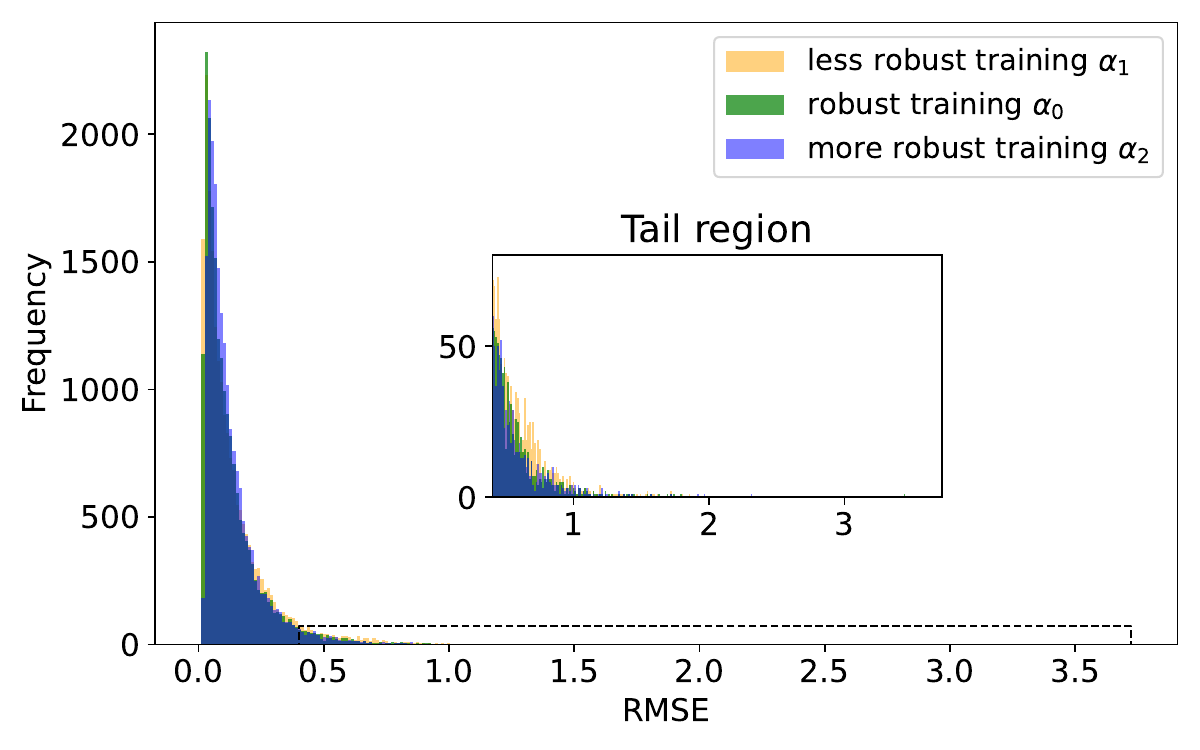}
        \caption{Out-of-distribution with multisine input}
        \label{fig:out_in_dist_multisine_robust}
    \end{subfigure}
    \hfill
    \begin{subfigure}[b]{0.32\textwidth}
        \centering
        \includegraphics[width=\textwidth]{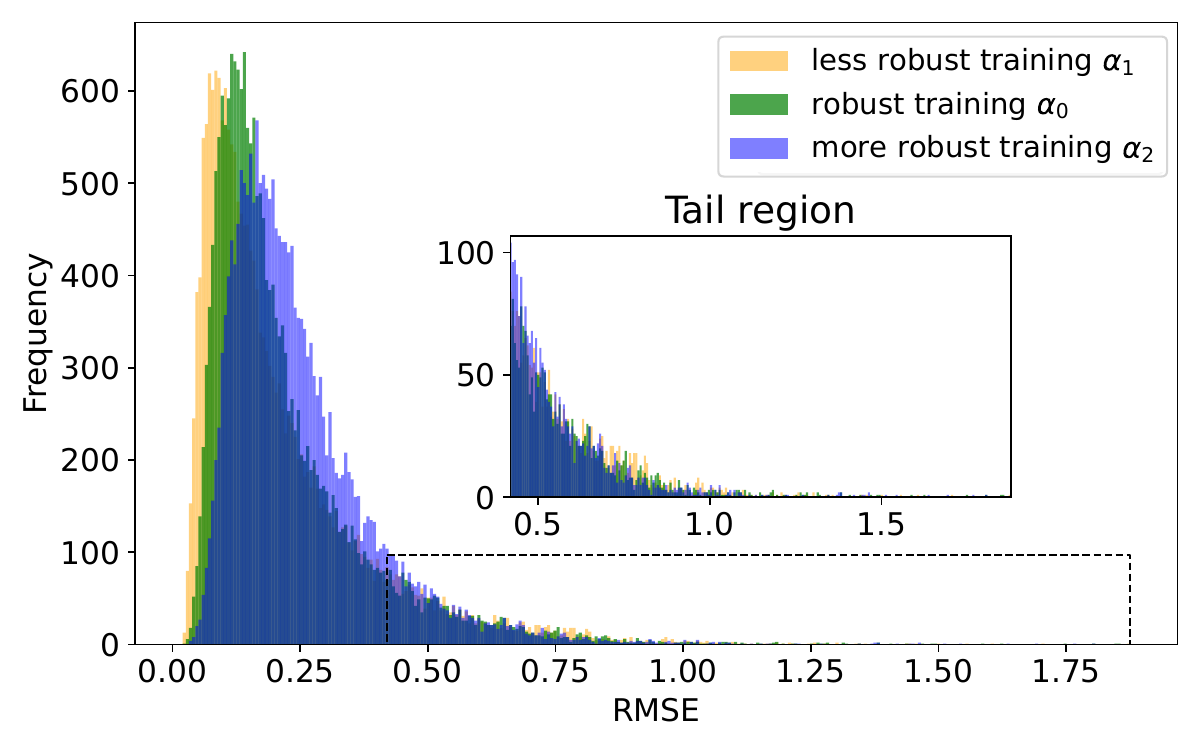}
        \caption{Out-of-distribution with RBS input}
        \label{fig:out_in_dist_rbs_robust}
    \end{subfigure}
    \caption{Histogram analysis: RMSE over the test batch for different input types (in-distribution and out-of-distribution scenarios) for the different choice of the tail treshold $\alpha$.}
    \label{fig:tail_comparison}
\end{figure*}

 \begin{figure*}[!hbt]
\centering
\includegraphics[width=.8\textwidth]{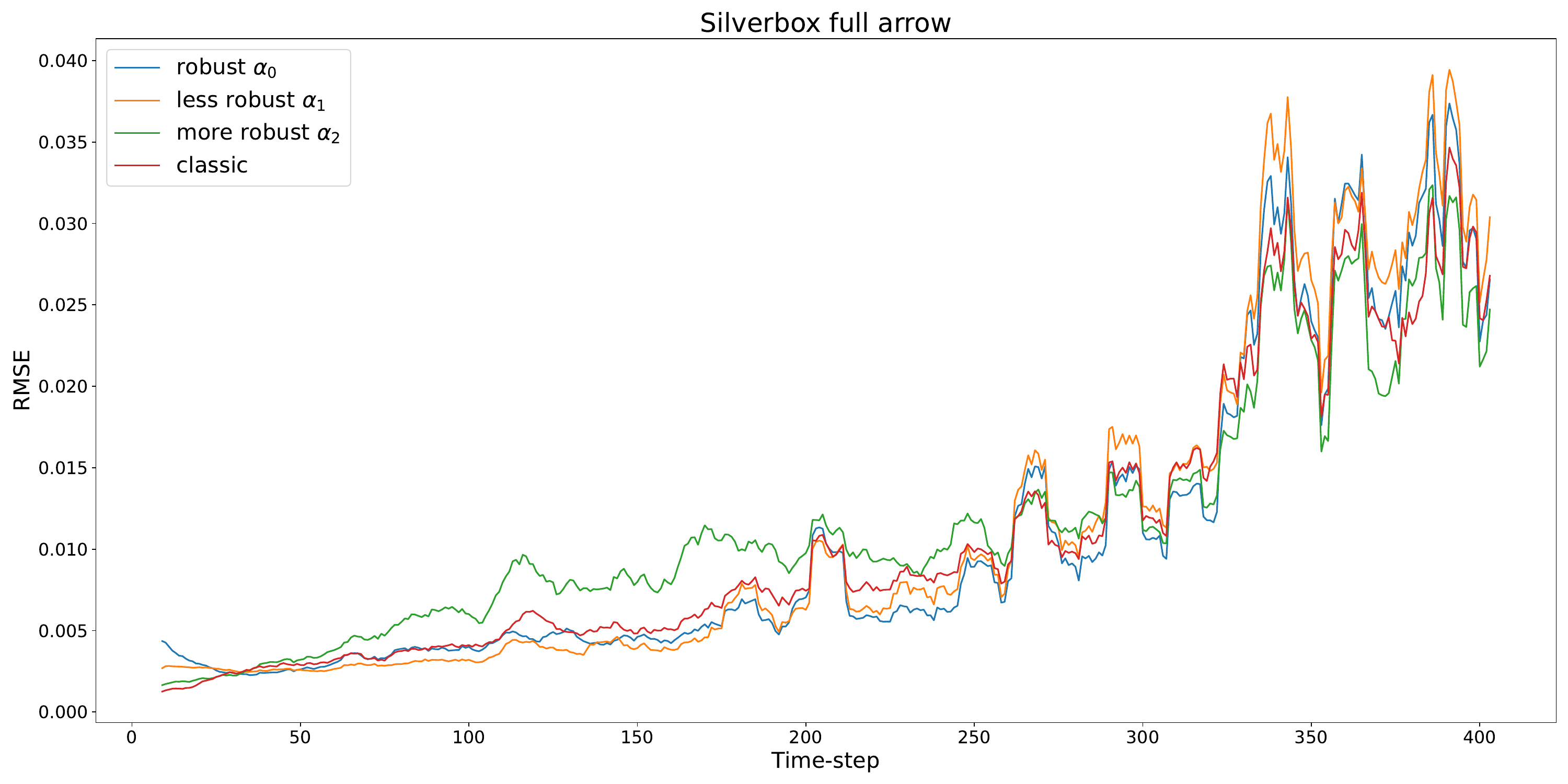}
\caption{Silverbox benchmark \textit{full-arrow} analysis: RMSE plot over the the time-step comparing classic and robust training with different tail tresholds $\alpha$.
}
\label{silv_full_arrow}
\end{figure*}

\section{Conclusions}
\label{sec:conc}

This work addresses meta-learning from a distributional robustness standpoint by reformulating the standard expected loss minimization to focus on high-risk tasks. Specifically, the objective is to minimize the expected tail loss over the task distribution, yielding a more robust meta-learning framework. This approach improves both stability and generalization of the resulting meta model. Numerical results show consistent gains not only in tail performance but also across the full task distribution. Robustness is further confirmed in {OOD} scenarios, where the method achieves reliable improvements.

Current research activities focus on extending the proposed robust learning approach in different directions:
\begin{itemize}
    \item \textbf{Meta-learning with alternative architectures}, such as structured state-space models or diffusion models. This requires appropriately modifying the assumptions in Proposition~\ref{prop:local_opt} to ensure convergence of the gradient-based robust optimization algorithm;
    \item \textbf{Stochastic optimal control}, with an emphasis on worst-case scenarios to enhance the robustness and safety of the designed controller.

    \item \textbf{Data-efficient training}, to reduce reliance on large synthetic datasets, that often requires prior knowledge of the system, by reusing simulated systems across epochs, while limitating overfitting.
    
    \item \textbf{Theoretical refinement}, focused on reformulating the assumptions of Lemma~\ref{lemma:local_opt_paper} from a more practical standpoint and deriving a stronger theorem that provides local optimality guarantees under more realistic conditions;
    
    \item \textbf{Uncertainty analysis}, both quantitatively, by increasing the number of training iterations, and qualitatively, through a theoretical investigation aimed at elucidating the role of uncertainty in guiding robust learning.

\end{itemize}

\bibliographystyle{IEEEtran} 
\bibliography{root} 
\end{document}